\newtheorem{proposition}{Proposition}
\newtheorem{problem}{Problem}
\newcommand{\cX}{{\mathcal X}}
\newcommand{\rE}{{\mathbf{E}}}
\title{Stable Learning via Sample Reweighting}
\author{Zheyan Shen,\textsuperscript{\rm 1}
		Peng Cui,\textsuperscript{\rm 1}
		Tong Zhang,\textsuperscript{\rm 2}
		Kun Kuang\textsuperscript{\rm 1, 3}\\
		\textsuperscript{\rm 1}Tsinghua University,
		\textsuperscript{\rm 2}The Hong Kong University of Science and Technology\\
		\textsuperscript{\rm 3}Zhejiang University\\
		shenzy17@mails.tsinghua.edu.cn, cuip@tsinghua.edu.cn\\
		tongzhang@tongzhang-ml.org, kkun2010@gmail.com
}
\begin{document}

\maketitle

\begin{abstract}
We consider the problem of learning linear prediction models with model misspecification bias.
In such case, the collinearity among input variables may inflate the error of parameter estimation, resulting in instability of prediction results when training and test distributions do not match.
In this paper we theoretically analyze this fundamental problem and propose a sample reweighting method that reduces collinearity among input variables. 
Our method can be seen as a pretreatment of data to improve the condition of design matrix, and it can then be combined with any standard learning method for parameter estimation and variable selection.
Empirical studies on both simulation and real datasets demonstrate the effectiveness of our method in terms of more stable performance across different distributed data.
\end{abstract}

\section{Introduction}  
We consider the classical problem of predicting a target $y$ using a linear combination of $p$ input variables $x=[x_1,\ldots,x_p] \in \mathbb{R}^p$.
In practice many machine learning methods can be used for such purpose. 
However, the performance of most machine learning methods deteriorate when the distribution of the test data deviates from that of the training data.
This is because the traditional learning methods rely on a fundamental assumption that the data drawn at training time are from the same underlying distribution as the test data.
In many real situations, however, this assumption can be violated since we have no prior knowledge on the test data which will be generated in the future.
Therefore, a large bunch of learning methods which assume the availability of the test data distribution (e.g. transfer learning \cite{pan2010survey}) are not readily applicable at such scenarios.

In this paper, we consider the {\em stable learning} problem that directly addresses this fundamental issue.
The goal of stable learning is to learn a predictive model that performs uniformly well on any data point $x$.
We actually need two assumptions: 
(1) There exists a stable structure between target $y$ and predictor $x_p$ which keeps invariant across the whole distribution. 
(2) There also exist spurious associations brought by external biases which could be unstable across different environments. 
It is common in practice that, due the different time spans, regions and strategies we collect the data, there usually exist such spurious associations. 
If we only leverage the stable structure for prediction, we can ensure good prediction performance even when the unknown test distribution significantly differs from the training distribution. 

The main challenge of stable learning is that in real applications, we can not expect to choose a completely correct model for the underlying application problem.
We show in this paper that if an incorrect model is used at the training time (which is inevitable in practice), the existence of collinearity among variables (i.e. linear dependence between two or more input variables) can inflate a small misspecification error arbitrarily large, thus causes instability of prediction performance across different distributed test data.
Therefore, how to reduce collinearity is of paramount importance in the stable learning problem.

Collinearity \cite{Alin2010Multicollinearity,farrar1967multicollinearity} can also be regarded as an ill-conditioning \cite{Fildes1993Conditioning} or lack of orthogonality for the design matrix $\mathbf{X}$.
It brings challenges to evaluate the individual importance of variables in a linear model since their contributions are interchangeable.
As a long standing problem in statistics, considerable efforts have been made on collinearity.
The major way to handle collinearity is performing variable selection.  
Mutual information based methods like \cite{kononenko1994estimating,raileanu2004theoretical,ding2005minimum,peng2005feature} can be seen as a pretreatment of data. 
They basically chooses a subset of variables that are representative and discriminative for response variable by maximizing the correlation between selected variables and response while minimizing the correlation among selected variables.
Another strand of methods are based on regularization techniques, which get significant attention because they simultaneously achieve good performance on parameter estimation and variable selection.
Distinguished by different assumptions on the variables' structure, the approach of \cite{zou2005regularization,lorbert2010exploiting,grave2011trace} designs penalty terms by constraining the correlated variables to be either all selected or not selected at all as "clusters", and the approach of \cite{chen2013uncorrelated,takada2018IILasso,zhou2010exclusive} chooses only one variable within a single cluster.
The performances of these methods depend highly on the correct hypothesis on the structure of variables.
When there are inactive variables or multiple active variables in the correlated clusters, these methods would suffer from either loss of information or inclusion of inactive variables, resulting in unstable behaviors over changing distributions.

In this paper, we focus on the stable learning problem of linear models under model misspecification.
We first provide a theoretical analysis on the worst estimation error brought by misspecification bias and demonstrate its direct connection to collinearity.
In order to alleviate the collinearity among variables, we propose a novel sample reweighting scheme.
We theoretically prove that there exist a set of sample weights which can make the design matrix near orthogonal in an idealized situation. 
Accordingly, we propose a Sample Reweighted Decorrelation Operator (SRDO) to reduce collinearity in practice.
Specifically, we construct an uncorrelated design matrix $\tilde{\mathbf{X}}$ from original $\mathbf{X}$ as the 'oracle', and learn the sample weights $w(x)$ by estimating the density ratio of underlying uncorrelated distribution $\tilde{D}$ and original distribution $D$. 

This method can be regarded as a general data pretreatment method that improves the condition of the underlying design matrix for prediction purpose.
The learned sample weights can be easily integrated into standard linear regression methods such as ordinary least squares regression, Lasso and logistic regression for classification task to improve their stability across different distributed test data.

The main contributions of our paper are as follows:
\begin{itemize}
	\item We investigate the stable learning problem of linear models with model misspecification under changing test distributions. 
	This problem is fundamental and of paramount importance to real applications which require model robustness and stability.
	We do not assume the availability of the test data distributuion, which is more realistic at practice.
	\item We theoretically prove the direct connection of prediction stability and the collinearity between variables, and propose a novel Sample Reweighted Decorrelation Operator (SRDO) to reduce the collinearity of design matrix.
	\item SRDO is a general data pretreatment method that can be easily integrated into a wide range of classical methods for parameter estimation, variable selection and prediction, and the extensive experiments on both synthetic and real datasets demonstrate its superior performances in both prediction stability and accuracy under changing distributions.
\end{itemize}

\section{Problem and Method}

\textbf{Notations.} 
In this paper, we let $n$ denote the sample size, $p$ denote the dimension of observed variables. 
For any matrix $\mathbf{A}\in \mathbb{R}^{n\times p}$, let $\mathbf{A}_{i,}$ and $\mathbf{A}_{,j}$ represent the $i^{th}$ row and the $j^{th}$ column in $\mathbf{A}$, respectively. 
For any vector $\mathbf{v} = (v_1,v_2,\cdots,v_m)^\top$, let $\|\mathbf{v}\|_1 = \sum_{i=1}^m|v_i|$ and $\|\mathbf{v}\|_2^2 = \sum_{i=1}^m v_i^2$.

We first define the stable learning problems as follows:
\begin{problem}
	(\textbf{Stable Learning}) : Given the target $y$ and $p$ input variables $x=[x_1,\ldots,x_p] \in \mathbb{R}^p$, the task is to learn a predictive model which can achieve \textbf{uniformly} small error on \textbf{any} data point.
	\label{prob:stable_learning}
\end{problem}
In this paper, we study the above problem in the scope of linear models for regression and classification.

\subsection{Stable Linear Models for Regression}
We consider the linear regression problem with model misspecification.
Specifically, we can assume the target $y$ is generated by following from:
\begin{equation}
y=x^{\top} \overline{\beta}_{1 : p}+\overline{\beta}_{0}+b(x)+\epsilon,
\label{equ:Y_generation}
\end{equation}
where $x\in \mathbb{R}^{p}$ is input vector, $b(x)$ is a bias term that depends on $x$, such that $|b(x)| \leq \delta$, and $\epsilon$ is zero-mean noise with variance $\sigma^2$.

In stable learning, we assume the linear part of generation model is stable and invariant to unknown distribution shift while the misspecification bias $b(x)$ could be unstable.
In such sense, we want to estimate $\overline{\beta}$ as accurately as possible. 
Along with the property that the bias term $b(x)$ is uniformly small for all $x$, we can make reliable prediction for all $x$. 
In particular, a change of distribution does not matter for prediction purpose.

Given training data $\left\{\left(x_{1}, y_{1}\right), \ldots,\left(x_{n}, y_{n}\right)\right\}$, where the design matrix $\mathbf{X}$ is drawn from a distribution $D$.
We assume that $\left\|x_{i}\right\|_{2} \leq 1$. 
The standard approach of least squares regression solves the following problem:
\begin{equation}
\hat{\beta}=\arg \min _{\beta} \sum_{i=1}^{n}\left(x_{i}^{\top} \beta_{1 : p}+\beta_{0}-y_{i}\right)^{2}
\label{equ:OLS}
\end{equation}
Let $\gamma^{2}$ be the smallest eigenvalue of the centered covariance matrix $n^{-1} \sum_{i}\left(x_{i}-\overline{x}\right)\left(x_{i}-\right.\overline{x} )^{\top}$, where $\overline{x}=n^{-1} \sum_{i} x_{i}$. 

The approach considered in this paper is
motivated by the following theoretical result, which shows the effect of model mis-specification bias even when the sample size is infinity. 

\begin{proposition}
	Consider the least squares solution when the sample size is infinity:
	\begin{equation}
	\hat{\beta}=\arg \min _{\beta} \mathbf{E}_{(x, y)}\left(x^{\top} \beta_{1 : p}+\beta_{0}-y\right)^{2} .
	\label{equ:OLS_inf}
	\end{equation}
	The estimation bias caused by the worst case perturbation error $|b(x)| \leq \delta$ can be as bad as $\|\hat{\beta}-\overline{\beta}\|_2 \leq 2 (\delta / \gamma) +\delta$, where $\gamma^2$ is the smallest eigenvalue of $\mathbf{E}(x-\mathbf{E} x)(x-\mathbf{E} x)^{\top}$.
	\label{theorem:perturbation_bias}
\end{proposition}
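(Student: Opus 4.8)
The plan is to pass to the first-order (normal) equations of the infinite-sample least squares problem \eqref{equ:OLS_inf}, separate the error in the slope coefficients from the error in the intercept, bound the slope error by a whitening argument, and then recombine. Throughout write $\mu:=\mathbf{E}x$ and $C:=\mathbf{E}(x-\mu)(x-\mu)^\top$, so that $\gamma^2=\lambda_{\min}(C)$ (and $C$ is invertible, otherwise the claimed bound is vacuous). First I would derive the normal equations: augmenting $x$ to $\tilde x=(1,x^\top)^\top$ and $\beta$ to $(\beta_0,\beta_{1:p}^\top)^\top$, optimality in \eqref{equ:OLS_inf} reads $\mathbf{E}[\tilde x\tilde x^\top]\hat\beta=\mathbf{E}[\tilde x\,y]$. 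Substituting $y=\tilde x^\top\overline\beta+b(x)+\epsilon$ and using that $\epsilon$ is zero-mean and uncorrelated with $x$ (so $\mathbf{E}[\tilde x\epsilon]=0$), the noise cancels and $\mathbf{E}[\tilde x\tilde x^\top](\hat\beta-\overline\beta)=\mathbf{E}[\tilde x\,b(x)]$. Carrying out the standard partitioned-inverse (Frisch--Waugh) reduction that eliminates the constant coordinate gives the two identities
\begin{equation}
\begin{aligned}
\hat\beta_{1:p}-\overline\beta_{1:p} &= C^{-1}\,\mathbf{E}\big[(x-\mu)\,b(x)\big],\\
\hat\beta_0-\overline\beta_0 &= \mathbf{E}\,b(x)-\mu^\top\big(\hat\beta_{1:p}-\overline\beta_{1:p}\big).
\end{aligned}
\end{equation}

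The key step is to show $A:=\|\hat\beta_{1:p}-\overline\beta_{1:p}\|_2\le\delta/\gamma$. Let $v:=\mathbf{E}[(x-\mu)b(x)]$ and introduce the whitened vector $\xi:=C^{-1/2}(x-\mu)$, which satisfies $\mathbf{E}[\xi\xi^\top]=I$. Then $C^{-1}v=C^{-1/2}\,\mathbf{E}[\xi\,b(x)]$, so $A\le\|C^{-1/2}\|_{\mathrm{op}}\,\|\mathbf{E}[\xi\,b(x)]\|_2=\gamma^{-1}\|\mathbf{E}[\xi\,b(x)]\|_2$. For any unit vector $u$, Cauchy--Schwarz and $|b(x)|\le\delta$ give $u^\top\mathbf{E}[\xi\,b(x)]=\mathbf{E}[(u^\top\xi)b(x)]\le\delta\,\mathbf{E}|u^\top\xi|\le\delta\big(\mathbf{E}(u^\top\xi)^2\big)^{1/2}=\delta(u^\top u)^{1/2}=\delta$; taking the supremum over $u$ yields $\|\mathbf{E}[\xi\,b(x)]\|_2\le\delta$ and hence $A\le\delta/\gamma$. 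I expect this whitening step to be the crux of the argument: the lazier estimate $\|C^{-1}v\|_2\le\gamma^{-2}\|v\|_2$ with $\|v\|_2\le\delta\sqrt{\lambda_{\max}(C)}$ only produces $A\lesssim\delta/\gamma^2$, which is strictly weaker when $C$ is ill-conditioned; splitting $C^{-1}=C^{-1/2}\cdot C^{-1/2}$ and pushing one half onto $x-\mu$ is precisely what upgrades $\gamma^{-2}$ to $\gamma^{-1}$.

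Finally I would recombine. Since $\|x\|_2\le1$ we have $\|\mu\|_2\le\mathbf{E}\|x\|_2\le1$, and $|\mathbf{E}\,b(x)|\le\mathbf{E}|b(x)|\le\delta$, so the second identity gives $|\hat\beta_0-\overline\beta_0|\le\|\mu\|_2\,A+\delta\le A+\delta$. Consequently
\begin{equation}
\begin{aligned}
\|\hat\beta-\overline\beta\|_2 &= \sqrt{A^2+(\hat\beta_0-\overline\beta_0)^2}\\
&\le \sqrt{A^2+(A+\delta)^2}\ \le\ 2A+\delta\ \le\ 2(\delta/\gamma)+\delta,
\end{aligned}
\end{equation}
where $\sqrt{A^2+(A+\delta)^2}\le 2A+\delta$ because, after squaring, it reduces to $0\le 2A^2+2A\delta$. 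This is the stated bound. To support the ``can be as bad as'' phrasing I would also exhibit a near-matching $b$: taking $b(x)=\delta\,\mathrm{sign}\!\big(w^\top(x-\mu)\big)$ with $w$ the bottom eigenvector of $C$ forces $w^\top v=\delta\,\mathbf{E}|w^\top(x-\mu)|$, hence $A\ge\gamma^{-2}|w^\top v|=(\delta/\gamma^2)\,\mathbf{E}|w^\top(x-\mu)|$, and since $\mathbf{E}|w^\top(x-\mu)|$ can be of order $\sqrt{w^\top C w}=\gamma$ this shows $A$ is $\Theta(\delta/\gamma)$ in the worst case; making the lower bound on $\mathbf{E}|w^\top(x-\mu)|$ rigorous is a minor extra technicality not needed for the inequality actually claimed.
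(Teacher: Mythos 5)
Your proof is correct and follows essentially the same route as the paper's: eliminate the intercept via the normal equations to get $\hat\beta_{1:p}-\overline\beta_{1:p}=C^{-1}\mathbf{E}[(x-\mu)b(x)]$, bound this by $\delta/\gamma$, bound the intercept error by $\delta+\delta/\gamma$, and combine. Your whitening/Cauchy--Schwarz step just makes explicit the $\delta/\gamma$ bound that the paper asserts without detail, and the near-matching lower-bound construction is a harmless extra.
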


\begin{proof}
	Let $\Delta \beta=\beta-\bar{\beta}$ and $\Delta \hat{\beta}=\hat{\beta}-\bar{\beta}$.
	We have
	\begin{equation}
	\Delta\hat{\beta}=\arg\min_{\Delta \beta} \rE_{x} (x^\top \Delta\beta_{1:p}+\Delta\beta_0 -b(x))^2.
	\end{equation}
	At the optimal solution, we have $\Delta\hat{\beta}_0=\rE_x b(x)-\rE_x x^\top \Delta\hat{\beta}_{1:p}$. 
	By eliminating $\beta_0$, and let $\tilde{x}=x-\rE x$, and $\tilde{b}(x)=b(x)-\rE_x b(x)$, we have
	\begin{equation}
	\Delta\hat{\beta}_{1:p}=\arg\min_{\Delta\beta_{1:p}} (\tilde{x}^\top \Delta\beta_{1:p}-\tilde{b}(x))^2 .
	\end{equation}
	It follows that
	\begin{equation}
	\Delta \hat{\beta}_{1:p}= (\rE \tilde{x}\tilde{x}^\top)^{-1} \rE \tilde{b}(x) \tilde{x}.
	\end{equation}
	This implies that
	$\|\Delta \hat{\beta}_{1:p}\|_2 \leq \delta/\gamma$.
	Moreover, it implies that $|\Delta \hat{\beta}_0| \leq \delta + \delta/\gamma$. We thus obtain the desired bound.
\end{proof}
From Proposition \ref{theorem:perturbation_bias}, we observe that the worst case estimation error goes to infinity when $\gamma$ goes to zero. This implies that when the variables are highly collinear, the ordinary least squares method produces a poor solution even when the training data size is very large (or infinity). 

So the problem of stable learning is to find stable $\hat{\beta}$ such that in the infinite sample case, for the worst $|b(x)| \leq \delta$, the estimation error is $\|\hat{\beta}-\overline{\beta}\|_2=O(\delta)$ and independent of $\gamma$. 
This means we tolerate bias caused by collinearity.

To tackle with collinearity, we propose a sample reweighting scheme as follows in the infinite sample case:
\begin{equation}
\hat{\beta}=\arg \min _{\beta} \mathbf{E}_{(x) \sim D} w(x)\left(x^{\top} \beta_{1 : p}+\beta_{0}-y\right)^{2},
\label{equ:WLS_inf}
\end{equation}
where $w(x)$ is the sample weight which is to be learned.

This is equivalent to
\begin{equation}
\hat{\beta}=\arg \min _{\beta} \mathbf{E}_{(x) \sim \tilde{D}}\left(x^{\top} \beta_{1 : p}+\beta_{0}-y\right)^{2},
\label{equ:WLS_transfer}
\end{equation}
where
\begin{equation}
\frac{p_{\tilde{D}}(x)}{p_{D(x)}}=w(x).
\label{equ:density_estimate}
\end{equation}
For $\tilde{D}$ to be a valid distribution, we have $\rE_{x \sim D}[w(x)] = 1$.

The goal of sample reweighting is to improve $\tilde{\gamma}$, where $\tilde{\gamma}^2$ is the smallest eigenvalue of 
\[
\mathbf{E}_{(x)\sim \tilde{D}}(x-\mathbf{E}_{x \sim \tilde{D}} x)(x-\mathbf{E}_{x \sim \tilde{D}} x)^{\top} ,
\]
with $x$ drawn from $\tilde{D}$. 

However, if $\rE_{x \sim D} w(x)^2$ is large, then we have a penalty in the finite sample error caused by the random noise $\epsilon$. 
In fact, in the weighted least squares model, when $n \rightarrow \infty$, by Slutsky's theorem, we have
\begin{equation}
\sqrt{n}(\widehat{\mathbf{\beta}}-\bar{\mathbf{\beta}})\stackrel{d}{ \longrightarrow}N(\mathbf{0},\mathbf{Q}),
\end{equation}
where
\[
\mathbf{Q} = E\left[w(x_{i})\mathbf{x}_{i}\mathbf {x}_{i}^\top\right]^{-1} E\left[w(x_{i})^{2}\mathbf{x}_{i}\mathbf{x}_{i}^\top \epsilon_{i}^2\right] E\left[w(x_{i})\mathbf{x}_{i}\mathbf{x}_{i}^\top\right]^{-1},
\]
then similar to the analysis in Proposition 1, the finite sample estimation error caused by random noise $\epsilon$ is bounded by
\begin{equation}
O\left(n^{-1/2} \sqrt{\rE_{x \sim D} w(x)^{2}} \sigma / \tilde{\gamma}\right) .
\end{equation}
By combining this result with Proposition \ref{theorem:perturbation_bias}, the total estimation error $\|\hat{\beta}-\overline{\beta}\|_2$ (caused by both the bias $b(x)$ and random noise $\epsilon$) in the finite sample case is:
\begin{equation}
O(\delta / \tilde{\gamma})+O\left(n^{-1 / 2} \sqrt{\mathbf{E}_{x \sim {D}} w(x)^{2}} \sigma / \tilde{\gamma}\right) ,
\label{equ:total_bias}
\end{equation}
when $n$ is large. 
The first term on the right hand side is bias, which is independent of the training sample size $n$, and the second term is the square root of the variance, which depends on $n$. 
Reweighting can reduce the bias term, but increases the variance term in general.
Therefore in the small sample case, where $n$ is not large, there is a tradeoff. 

If we can make $\tilde{\gamma}$ close to 1, then the estimation bias brought by $b(x)$ will become $O(\delta / \tilde{\gamma})=O(1)$ as we can assume the misspecification error $\delta$ to be a measurable and bounded "systematic" error and could be seen as a constant value for a specific system.
Thus the total bias is
\begin{equation}
\|\hat{\beta}-\overline{\beta}\|_2=O(1)+O\left(n^{-1 / 2} \sqrt{\mathbf{E}_{x \sim {D}} w(x)^{2}} \sigma \right),
\label{equ:total_bias_W}
\end{equation}
which becomes irrelevant to collinearity and achieves stable prediction we have discussed.

The following proposition shows that under the idealized situation, it is possible to find weights $w$ so that the design matrix becomes near orthogonal (after centering) when the sample size $n \to \infty$.
\begin{proposition}
	Let $p_u(x)$ be the uniform distribution on $\cX=\cX_1 \times \cdots \times \cX_p \subset R^p$, and assume that
	$\rE_{x \sim p_u(x)} \|x\|_2^2 < \infty$.
	Assume that each variable $x_j \in \cX_j$, and the vector $x=[x_j]$ has density $p(x)$ on $\cX$ such that $0 < 2\gamma_0 \leq p_u(x)/p(x) \leq \gamma_1/2$. 
	For all $\xi>0$, and $\zeta>0$, there exists $N$ such that for all $n>N$, with probability larger than $1-\zeta$,
	there exists  and $w$ such that $\|w\|_1=1$,
	$\gamma_0/n \leq \|w\|_\infty \leq \gamma_1/n$, and
	\begin{equation}
	|n^{-1} \sum_i w_i (x_{i,j}-c_j) (x_{i,k}-c_k)| \leq \xi,
	\end{equation}
	where
	$c_j = n^{-1}\sum_i c_{i,j}$ is the mean of each variable $j$ and $j \neq k$.
	\label{pro:exist}
\end{proposition}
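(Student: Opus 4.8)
The plan is to take the sample weights to be a normalized version of the \emph{density ratio} $\tilde w(x)=p_u(x)/p(x)$, so that the reweighted empirical measure $\sum_i w_i\,\delta_{x_i}$ approximates the uniform product measure $p_u$. Under $p_u$ the coordinates $x_1,\dots,x_p$ are independent, hence pairwise uncorrelated, so the near-orthogonality claim reduces to a law-of-large-numbers statement for quantities whose limits are the (zero) off-diagonal covariances of $p_u$.

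First I would define $\tilde w(x)=p_u(x)/p(x)$ on $\cX$. By the hypothesis $0<2\gamma_0\le p_u(x)/p(x)\le\gamma_1/2$ we have $2\gamma_0\le\tilde w(x)\le\gamma_1/2$ for all $x$, and the change-of-measure identity $\mathbf E_{x\sim D}[\tilde w(x)f(x)]=\int f\,p_u\,dx=\mathbf E_{x\sim p_u}[f(x)]$ holds for every $f$ with $\mathbf E_{p_u}|f|<\infty$; in particular $\mathbf E_{x\sim D}[\tilde w(x)]=1$. I would then set $w_i=\tilde w(x_i)\big/\sum_{\ell=1}^n\tilde w(x_\ell)$, which is nonnegative with $\|w\|_1=1$ by construction. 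Writing $S_n=n^{-1}\sum_\ell\tilde w(x_\ell)$ so that $w_i=\tilde w(x_i)/(nS_n)$, the summands $\tilde w(x_\ell)$ are i.i.d.\ and bounded, so $S_n\to1$ in probability; hence there is $N_0$ with $\Pr(S_n\in[1/2,2])\ge1-\zeta/3$ for $n>N_0$. On that event, since every $\tilde w(x_i)$ — and therefore $\max_i\tilde w(x_i)$ — lies in $[2\gamma_0,\gamma_1/2]$, we get $\gamma_0/n\le 2\gamma_0/(2n)\le\|w\|_\infty\le(\gamma_1/2)/(n/2)=\gamma_1/n$. This is exactly where the factor-of-two slack in the hypothesis on $p_u/p$ is consumed: it absorbs the fluctuation of $S_n$ about its mean $1$.

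For the near-orthogonality, the key structural fact is that $p_u$ is the product of its marginals on $\cX_1,\dots,\cX_p$, so for $j\neq k$ the coordinates $x_j,x_k$ are independent under $p_u$ and $\mathbf E_{p_u}[x_jx_k]=\mathbf E_{p_u}[x_j]\,\mathbf E_{p_u}[x_k]$. I would apply the weak law of large numbers to $A^{jk}_n=n^{-1}\sum_i\tilde w(x_i)x_{i,j}x_{i,k}$ and $A^{j}_n=n^{-1}\sum_i\tilde w(x_i)x_{i,j}$: boundedness of $\tilde w$, Cauchy--Schwarz, and the assumption $\mathbf E_{x\sim p_u}\|x\|_2^2<\infty$ make the relevant $D$-expectations $\mathbf E_{x\sim D}[\tilde w\,x_jx_k]=\mathbf E_{p_u}[x_jx_k]$ and $\mathbf E_{x\sim D}[\tilde w\,x_j]=\mathbf E_{p_u}[x_j]$ finite, so $A^{jk}_n\to\mathbf E_{p_u}[x_jx_k]$ and $A^{j}_n\to\mathbf E_{p_u}[x_j]$ in probability. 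Taking $c_j=\sum_i w_ix_{i,j}=A^j_n/S_n$, the reweighted covariance equals $\sum_i w_ix_{i,j}x_{i,k}-c_jc_k=A^{jk}_n/S_n-(A^j_n/S_n)(A^k_n/S_n)$, which by Slutsky's theorem (using $S_n\to1$) converges in probability to $\mathbf E_{p_u}[x_jx_k]-\mathbf E_{p_u}[x_j]\mathbf E_{p_u}[x_k]=0$. Hence for each ordered pair there is $N_{jk}$ such that for $n>N_{jk}$ this quantity has absolute value $\le\xi$ with probability at least $1-\zeta/(3\binom p2)$.

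Finally I would take $N=\max\big(\{N_0\}\cup\{N_{jk}:j\neq k\}\big)$; for $n>N$, intersecting $\{S_n\in[1/2,2]\}$ with the $\binom p2$ events above leaves probability at least $1-\zeta$, and on that intersection $w$ has all three required properties. The main obstacle — indeed essentially the only step that is not routine bookkeeping — is carrying the normalizer $S_n$ consistently through both the $\ell_\infty$ bound and the reweighted mean and covariance, rather than pretending $\tilde w$ integrates to exactly $1$ on the finite sample; the factor-of-two window in the hypothesis is tailored precisely to this. A secondary point requiring some care is checking the integrability needed to invoke the LLN for the cross term $\tilde w(x_i)x_{i,j}x_{i,k}$, which is exactly what $\mathbf E_{x\sim p_u}\|x\|_2^2<\infty$ supplies via Cauchy--Schwarz.
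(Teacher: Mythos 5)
Your proof is correct and follows essentially the same route as the paper's: take $w(x)=p_u(x)/p(x)$, normalize by the empirical sum, and apply the law of large numbers to each pair $(j,k)$ with a union bound. You are in fact more careful than the paper on the two points it glosses over --- carrying the normalizer $S_n$ through the $\ell_\infty$ bound (which is indeed exactly where the factor-of-two slack in $2\gamma_0\le p_u/p\le\gamma_1/2$ is spent) and centering at the \emph{weighted} mean $\sum_i w_i x_{i,j}$ rather than the unweighted one, which is the centering for which the limiting cross-covariance is genuinely zero.
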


\begin{proof}
	Let $w(x)=p_u(x)/p(x)$. 
	For each pair $1 \leq j \neq k \leq p$, we know that for $x = [x^1,\ldots,x^p] \in \cX$,
	\begin{equation}
	\rE_{x \sim p(x)} w(x) (x^k- \rE x^k) (x^j-\rE x^j) =0 .
	\end{equation}
	We also know that
	\[
	\rE_{x \sim p(x)} w(x)=1.
	\]
	Therefore by the law of large numbers,
	there exists $N$ such that with probability larger than $1-\zeta/p^2$,
	when we draw $x_i=[x_{i,1},\ldots,x_{i,p}]$ from $p(x)$ for $i=1,\ldots,n$, we can set $w_i=w(x_i)/\sum_j w(x_j)$, and then
	\begin{equation}
	\left| \frac1n \sum_{i=1}^n w_i (x_{i,j}-c_j) (x_{i,k}-c_k)\right| \leq \xi ,
	\end{equation}
	and 
	\[
	\frac{1}{n}\sum_j w(x_j) \in [0.5,2] .
	\]
	Taking union bound over pairs of $(i,j)$, we obtain the desired result. 
\end{proof}

Assume we standardize all the variables, then the sample covariance matrix becomes correlation matrix $R$ and Proposition \ref{pro:exist} shows that the off-diagonal elements of sample covariance matrix could be bounded arbitrarily small by $\xi$ with the sample weight $w(x)$.

Let $M = R - I_p$, by the Gershgorin circle thorem, we can get $\gamma^2 \geq 1-\|M\|_{\infty} = 1-(p-1)\xi$.
Therefore, by reducing the pairwise correlation between variables (a.k.a. the off-diagonal elements of $R$), we can adjust the smallest eigenvalue to be nearly 1.

Inspired by the Proposition \ref{pro:exist}, we propose a Sample Reweighted Decorrelation Operator (SRDO) to reduce the collinearity of design matrix.
First, we use design matrix $\mathbf{X}$ to generate a column-decorrelated one $\tilde{\mathbf{X}}$ by performing random resampling column-wisely, which breaks down the joint distribution of variables in $\mathbf{X}$ into $p$ independent marginal distributions in $\tilde{\mathbf{X}}$.
Then we can learn the sample weight by density ratio estimation \cite{sugiyama2012density} to transfer the original $\mathbf{X} \sim D$ to $\tilde{\mathbf{X}} \sim \tilde{D}$.

Specifically, we set $\tilde{\mathbf{X}}$ as positive samples ($Z=1$) while $\mathbf{X}$ as negative samples ($Z=0$) and fit a probabilistic classifier.
Via Bayes theorem, density ratio is given by:
\begin{equation}
w(x)
=\frac{p_{\tilde{D}}(\boldsymbol{x})}{p_{D}(\boldsymbol{x})}
=\frac{p(\boldsymbol{x}|\tilde{D})}{p(\boldsymbol{x}| D)}
=\frac{p(\tilde{D})}{p(D)}\frac{p(Z=1|x)}{p(Z=0|x)}.
\label{eq:W_estimate}
\end{equation}
Note that the prior $\frac{p(\tilde{D})}{p(D)}$ is constant over all the samples so we can just omit it. 
To achieve a unit mean of $w(x)$, we can further divide $w(x)$ by its mean $\frac{1}{n}\sum_{i=1}^{n}w(x_i)$.
The algorithm of Sample Reweighted Decorrelation Operator (SRDO) can be summarized as follows:
\begin{algorithm}
	\caption{Sample Reweighted Decorrelation Operator (SRDO)}
	\label{alg:DSRO}
	\begin{algorithmic}[1]
		\Require Design Matrix $\mathbf{X}$
		\State \textbf{for} $i = 1\dots n$ \textbf{do}  
		\State \quad Initialize a new sample $\tilde{x}_i\in \mathbb{R}^{p}$ with empty vector
		\State \quad \textbf{for} $j = 1 \dots p$ \textbf{do}
		\State \quad\quad Draw the $j^{th}$ feature of new sample $\tilde{x}_{i,j}$ from $\mathbf{X}_{,j}$ at random
		\State \quad \textbf{end for}
		\State \textbf{end for}
		\State Set $\tilde{x}_i$ as positive samples and $x_i$ as negative samples, then train a binary classifier.
		\State Set $w(x) = \frac{p(Z=1|x)}{p(Z=0|x)}$ for each sample $x_i$ in $\mathbf{X}$, where $p(Z=1|x)$ is the probability of sample $x$ been drawn from $\tilde{D}$ estimated by the trained classifier. 
		\Ensure A set of sample weights $w(x)$ which can deccorelate $\mathbf{X}$
	\end{algorithmic}
\end{algorithm}

\subsection{Stable Linear Models for Classification}
In addition to regression, the idea of sample reweighting can also be applied to classification problems.
For simplicity, we consider the binary classification using logistic regression.

In binary classification, we have $\beta^{\top} x \in R$ and $y \in\{ \pm 1\}$.
The overall loss function is
\begin{equation}
\sum_{i=1}^{n} \ln \left(1+\exp \left(-\beta^{\top} x_{i} y_{i}\right)\right).
\label{equ:logistic_loss}
\end{equation}
Given an approximate solution $\tilde{\beta}$ and let $\tilde{p}_{i}=\tilde{p}\left(x_{i}\right)=1 /\left(1+\exp \left(-\tilde{\beta}^{\top} x_{i}\right)\right),$
we can use Taylor expansion at this solution to approximate the loss function as the following
weighted least squares:
\begin{equation}
\sum_{i=1}^{n} \tilde{p}_{i}\left(1-\tilde{p}_{i}\right)\left(\beta^{\top} x_{i}-z_{i}\right)^{2},
\end{equation}
where $z_i$ is the effective response define by
\begin{equation}
z_i\equiv g(\beta^{\top}x_{i})+(y-\beta^{\top} x_{i}) g^{\prime}(\beta^{\top} x_{i}),
\end{equation}
and $g(x) \equiv \log \frac{x}{1-x}.$

Instead of making the covariance matrix of $\mathbf{X}$ as close as identity, we want the weighted covariance matrix to be decorrelated.
So we can still use the aforementioned methods to estimate $w(x)$ with minor modification as follows:
\begin{equation}
\tilde{p}(x)(1-\tilde{p}(x))w(x)=\frac{p(Z=1|x)}{p(Z=0|x)}.
\end{equation}
In practice, we can ignore those samples which can be predicted accurately by approximate solution with high confidence to reduce the variance of sample weights $w(x)$.
We can then solve a weighted logistic regression as follows:
\begin{equation}
\sum_{i=1}^{n} w\left(x_{i}\right) \ln \left(1+\exp \left(-\beta^{\top} x_{i} y_{i}\right)\right).
\end{equation}

\section{Experiments}
In this section, we evaluate the effectiveness of our algorithm through simulation study and two real world datasets for regression and classification.

\subsection{Baselines}
For regression task, we compare the performance of our method with OLS, Lasso \cite{Tibshirani1996Regression}, Elastic Net \cite{zou2005regularization}, ULasso \cite{chen2013uncorrelated} and IILasso \cite{takada2018IILasso}.
The previous three baselines are classic methods for general purpose, while ULasso and IILasso are specifically designed for tackling collinearity and can be formulated as extensions to Lasso:
\begin{itemize}
	\item \par \noindent Uncorrelated Lasso (ULasso)
	\[
	\min \|Y-\mathbf{X}\beta\|_2^2+\lambda_1\|\beta\|_1+\lambda_2\beta^T\mathbf{C}\beta,
	\]
	where $\mathbf{C} \in \mathbb{R}^{p\times p}$ with each element $C_{jk}=r_{jk}^2$, and $r_{jk} = \frac{1}{n}|\mathbf{X}_{,j}^T\mathbf{X}_{,k}|$.
	
	\item \par \noindent Independently Interpretable Lasso (IILasso)
	\[
	\min \|Y-\mathbf{X}\beta\|_2^2+\lambda_1\|\beta\|_1+\lambda_2|\beta|^T\mathbf{R}|\beta|,
	\] 
	where $\mathbf{R} \in \mathbb{R}^{p\times p}$ with each element $R_{jk}=|r_{jk}|/(1-|r_{jk}|)$, and $r_{jk} = \frac{1}{n}|\mathbf{X}_{,j}^T\mathbf{X}_{,k}|$.
\end{itemize}
For classification task, we substitute log-likelihood loss for square loss in baselines.
The above methods have several hyper-parameters and we tune all the parameters by cross validation.
We apply the SRDO on ordinary least squares in regression tasks and on logistic regression in classification tasks to generate our results.

In our experiments, we consider the case of $n > p$. 
While for the opposite case, one may want to use shrinkage estimators like Ledoit-Wolf \cite{ledoit2004a}.
Due to the limited space, we just show a few settings, complete experiments and implementations will be released soon.

\begin{figure*}[htbp]
	\centering
	\subfloat[Estimation error]{
		\includegraphics[width=0.32\linewidth]{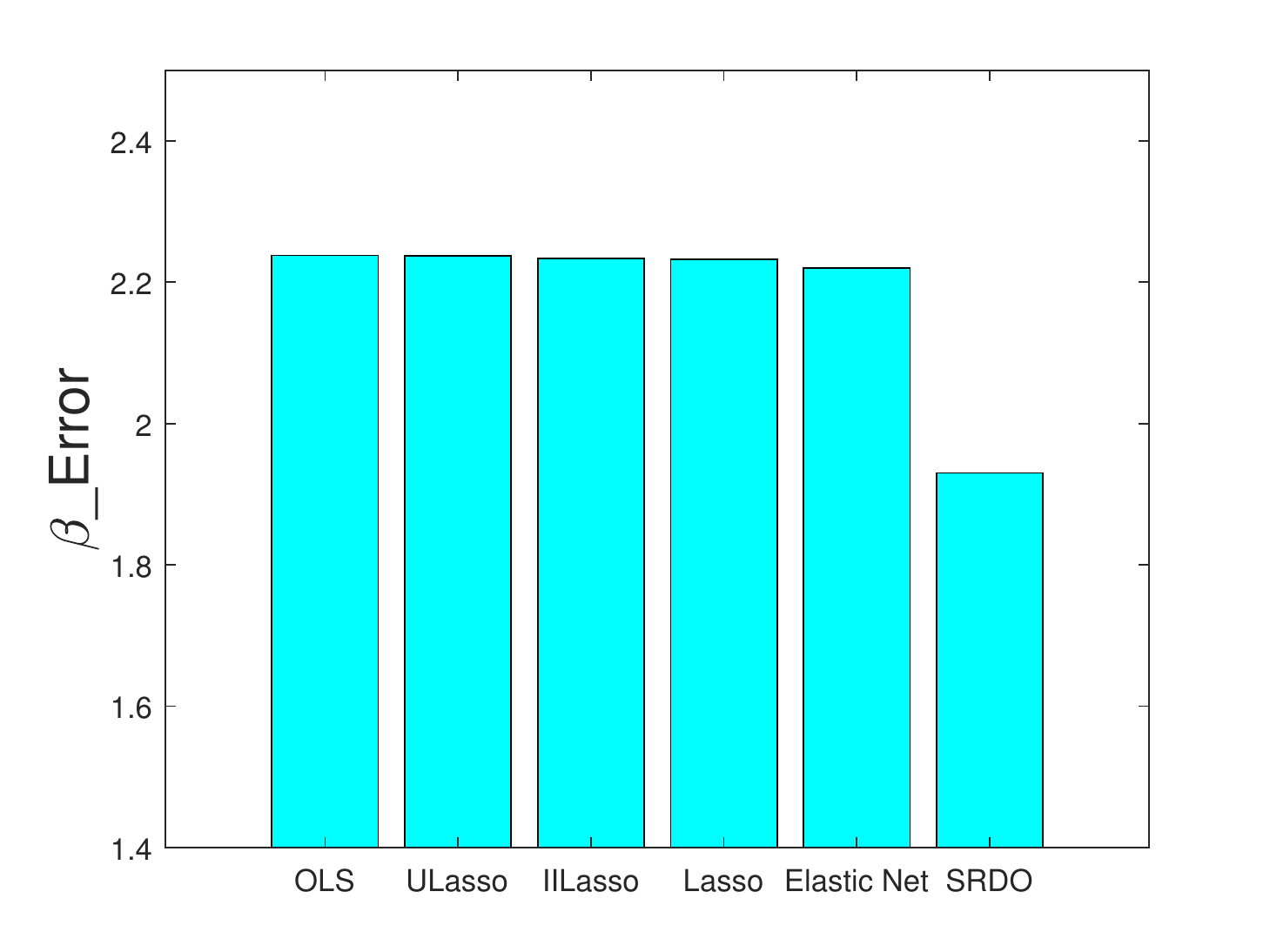}
	}
	\subfloat[Prediction error over different test environments]{
		\includegraphics[width=0.32\linewidth]{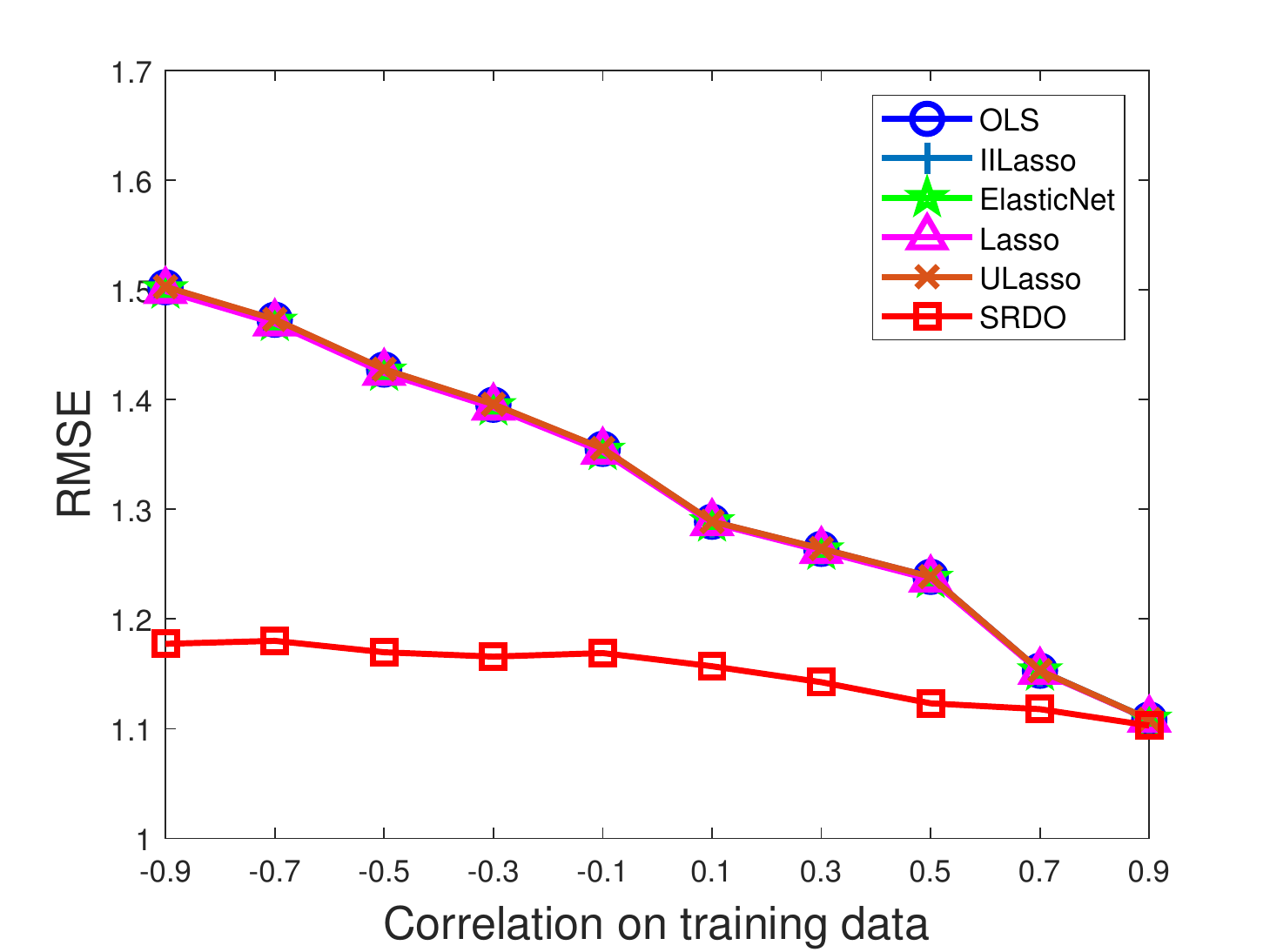}
	}
	\subfloat[Average prediction error\&stability]{
		\includegraphics[width=0.32\linewidth]{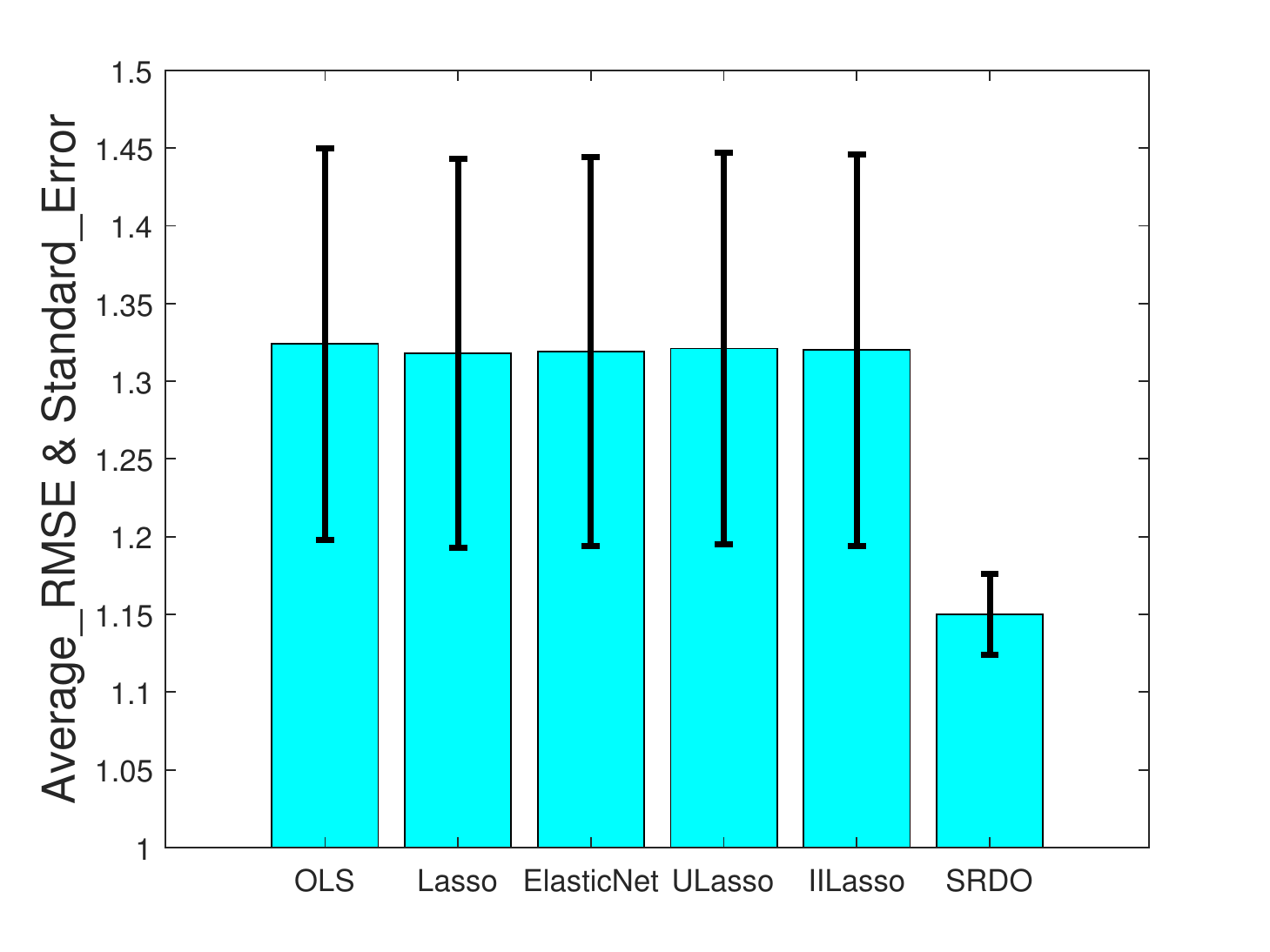}
	}
	\caption{Estimation and prediction results on simulation data generated by $n=1000,\ p=10,\ s=2, \ \rho_{train}=0.8$ and $\overline{\beta}=\{\frac{1}{5},-\frac{2}{5},\frac{3}{5},-\frac{4}{5},1,-\frac{1}{5},\frac{2}{5},-\frac{3}{5},\frac{4}{5},-1\}$.}
	\label{fig:synthetic}
\end{figure*}

\subsection{Simulation Study}

\subsubsection{Experimental Setting}
In simulation study, we generate the design matrix $\mathbf{X}$ from a multivariate normal distribution $\mathbf{X} \sim N(0, \Sigma)$, by specifying the structure of covariance matrix $\Sigma$. We can simulate different correlation structures of $\mathbf{X}$. 
Specifically, we let $\Sigma=\operatorname{Diag}\left(\Sigma^{(1)}, \cdots, \Sigma^{(q)}\right)$ to be a block diagonal matrix whose element $\Sigma^{(l)} \in \mathbb{R}^{s \times s}$ was $\Sigma_{j k}^{(l)}=\rho^{(l)}$ for $j \neq k$ and $\Sigma_{j k}^{(l)}=1$ for $j=k$.
So there will be $q$ groups among all the $p$ variables, and each group contains $s=\frac{p}{q}$ correlated variables. Then we generate bias term $b(\mathbf{X})$ with $b(\mathbf{X})=\mathbf{X}v$, where $v$ is the eigenvector of centered covariance matrix $n^{-1} \sum_{i}\left(x_{i}-\overline{x}\right)\left(x_{i}-\right.\overline{x} )^{\top}$ corresponding to its smallest eigenvalue $\gamma^2$. Finally, we generate the response variable $Y$ as follows:
\begin{eqnarray}
Y=X\overline{\beta}+b(\mathbf{X})+\mathcal{N}(0, 1).
\end{eqnarray}

We evaluate the estimation performance by absolute error ($\beta\_error$) defined as $\|\overline{\beta}-\hat{\beta}\|_1$.
During the training process, we run the experiment for 30 times and report the average $\beta\_error$ as estimation error.
For prediction, we choose the most accurate estimation in training and calculate root mean square error (RMSE) $\frac{1}{n}\sqrt{\sum_{i=1}^{n}(Y_i-\hat{Y}_i)}$, we also carry out this procedure for 30 times  and average the results.
Particularly, in stable learning we want to evaluate the performance of methods in the changing distributions of different test environments.
To do so, we train all the methods with fixed $\rho_{train}$ and generate different test environments by varying the $\rho$ in test data.
Then we report the average prediction error over various test environments to indicate prediction accuracy and its standard deviation to indicate prediction stability.
A stable model is expected to produce not only small average prediction error but also small variance across different test scenarios.

\begin{table*}[htbp]
	\small
	\centering
	\caption{Results of different methods when varying sample size $n$ and correlation $\rho$ of training data.}
	\label{tab:simulation}
	\begin{tabular}{|c|c|c|c|c|c|c|c|c|c|}
		\hline
		\multicolumn{10}{|c|}{\textbf{Scenario 1: varying correlation $\rho$\quad($n=1000,p=10,s=2$)}}\\
		\hline
		$\rho$&\multicolumn{3}{|c|}{$\rho=0.5$}&\multicolumn{3}{|c|}{$\rho=0.7$}&\multicolumn{3}{|c|}{$\rho=0.9$}\\
		\hline
		Methods & $\beta\_error$ & \multicolumn{2}{|c|}{$RMSE$\ ($STD$)} & $\beta\_error$ & \multicolumn{2}{|c|}{$RMSE$\ ($STD$)} & $\beta\_error$ & \multicolumn{2}{|c|}{$RMSE$\ ($STD$)}\\
		\hline 
		OLS         & 1.528 & \multicolumn{2}{|c|}{1.173(0.047)} & 1.896 & \multicolumn{2}{|c|}{1.261(0.101)} & 2.964 & \multicolumn{2}{|c|}{1.476(0.213)}\\ 
		Lasso       & 1.520 & \multicolumn{2}{|c|}{1.173(0.047)} & 1.892 & \multicolumn{2}{|c|}{1.263(0.102)} & 2.939 & \multicolumn{2}{|c|}{1.484(0.217)}\\
		Elastic Net & 1.515 & \multicolumn{2}{|c|}{1.171(0.046)} & 1.884 & \multicolumn{2}{|c|}{1.263(0.102)} & 2.938 & \multicolumn{2}{|c|}{1.483(0.217)}\\
		ULasso      & 1.527 & \multicolumn{2}{|c|}{1.173(0.047)} & 1.898 & \multicolumn{2}{|c|}{1.260(0.100)} & 2.950 & \multicolumn{2}{|c|}{1.480(0.215)}\\
		IILasso     & 1.534 & \multicolumn{2}{|c|}{1.177(0.049)} & 1.897 & \multicolumn{2}{|c|}{1.260(0.100)} & 2.957 & \multicolumn{2}{|c|}{1.476(0.213)}\\
		Our         & \textbf{1.402} & \multicolumn{2}{|c|}{\textbf{1.141}(\textbf{0.027})} & \textbf{1.759} & \multicolumn{2}{|c|}{\textbf{1.130}(\textbf{0.023})} & \textbf{2.544} & \multicolumn{2}{|c|}{\textbf{1.225}(\textbf{0.065})}\\
		\hline
		\multicolumn{10}{|c|}{\textbf{Scenario 2: varying sample size $n$\quad($p=10,s=2,\rho=0.9$)}}\\
		\hline
		$n$&\multicolumn{3}{|c|}{$n=500$}&\multicolumn{3}{|c|}{$n=2000$}&\multicolumn{3}{|c|}{$n=10000$}\\
		\hline
		Methods & $\beta\_error$ & \multicolumn{2}{|c|}{$RMSE$\ ($STD$)} & $\beta\_error$ & \multicolumn{2}{|c|}{$RMSE$\ ($STD$)} & $\beta\_error$ & \multicolumn{2}{|c|}{$RMSE$\ ($STD$)}\\
		\hline 
		OLS         & 3.241 & \multicolumn{2}{|c|}{1.382(0.153)} & 3.184 & \multicolumn{2}{|c|}{1.613(0.263)} & 3.168 & \multicolumn{2}{|c|}{1.574(0.243)}\\ 
		Lasso       & 3.232 & \multicolumn{2}{|c|}{1.384(0.154)} & 3.179 & \multicolumn{2}{|c|}{1.600(0.257)} & 3.145 & \multicolumn{2}{|c|}{1.560(0.236)}\\
		Elastic Net & 3.234 & \multicolumn{2}{|c|}{1.383(0.154)} & 3.166 & \multicolumn{2}{|c|}{1.596(0.255)} & 3.137 & \multicolumn{2}{|c|}{1.559(0.235)}\\
		ULasso      & \textbf{3.181} & \multicolumn{2}{|c|}{\textbf{1.382}(0.153)} & 3.182 & \multicolumn{2}{|c|}{1.608(0.260)} & 3.165 & \multicolumn{2}{|c|}{1.577(0.244)}\\
		IILasso     & 3.226 & \multicolumn{2}{|c|}{1.383(0.154)} & 3.184 & \multicolumn{2}{|c|}{1.607(0.260)} & 3.159 & \multicolumn{2}{|c|}{1.575(0.243)}\\
		Our         & 3.421 & \multicolumn{2}{|c|}{1.385(\textbf{0.126})} & \textbf{2.810} & \multicolumn{2}{|c|}{\textbf{1.384}(\textbf{0.150)}} & \textbf{2.762} & \multicolumn{2}{|c|}{\textbf{1.269}(\textbf{0.093})}\\
		\hline
	\end{tabular}
\end{table*}

\subsubsection{Results}
We conduct extensive experiments with different settings on $n$, $p$, $s$, and $\rho_{train}$. Due to the limitation of space, we only report a part of experimental settings and results, and more empirical results could be found in supplementary material.
From figure \ref{fig:synthetic} and Table \ref{tab:simulation}, we have the following observations and analysis:
\begin{itemize}
	\item Ordinary least squares suffers from collinearity in terms of error inflation and yields the worst performance in most of settings, which is consistent with our theoretical analysis.
	\item Lasso does not differentiate itself with OLS much because of the dense $\overline{\beta}$ we used in simulation. 
	The weakest signal has a magnitude of $0.2$ which is comparable to the largest one, so it is typically hard for coefficients shrinkage mechanism to work in such setting.
	\item Elastic Net performs slightly better than the other baselines due to its involvement of $l_2$ regularization in the collinear case, which has been discussed in \cite{Tibshirani1996Regression,zou2005regularization}.
	\item ULasso and IILasso can not quite solve the problem of collinearity in this experiment because they assume a sparse structure within correlated groups (i.e. there exists only one active variables among several correlated variables), which is not satisfied here.
	\item From Figure \ref{fig:synthetic}, we can find SRDO achieves smallest estimation error under strong correlations between variables, and a more stable prediction performance in different test settings, which achieves the goal of stable learning.
	Note that in the right end of Figure \ref{fig:synthetic} (b), all the methods generate comparable results, which coincides with I.I.D. assumption in that the strong collinearity in training data still persists in test data.
	However, as the discrepancy of training and test distribution getting larger (from the right end to the left end), the performance of baselines deteriorate rapidly.
	\item From Table \ref{tab:simulation}, we can find that when collinearity in training data becomes stronger, our method gains more improvement over baselines in all aspects including estimation error, prediction error and prediction stability.
	We also notice that our method is generally affected by sample size $n$.
	It typically performs well in large data, in relatively small sample setting, however, SRDO may suffer from variance inflation in terms of parameter estimation, which counteracts the benefit brought by bias reduction.
\end{itemize}
These results demonstrate the superior capability of our method in handling the negative effects aroused by strong collinearity among variables.

\begin{figure}[h]
	\centering
	\includegraphics[width=0.95\columnwidth]{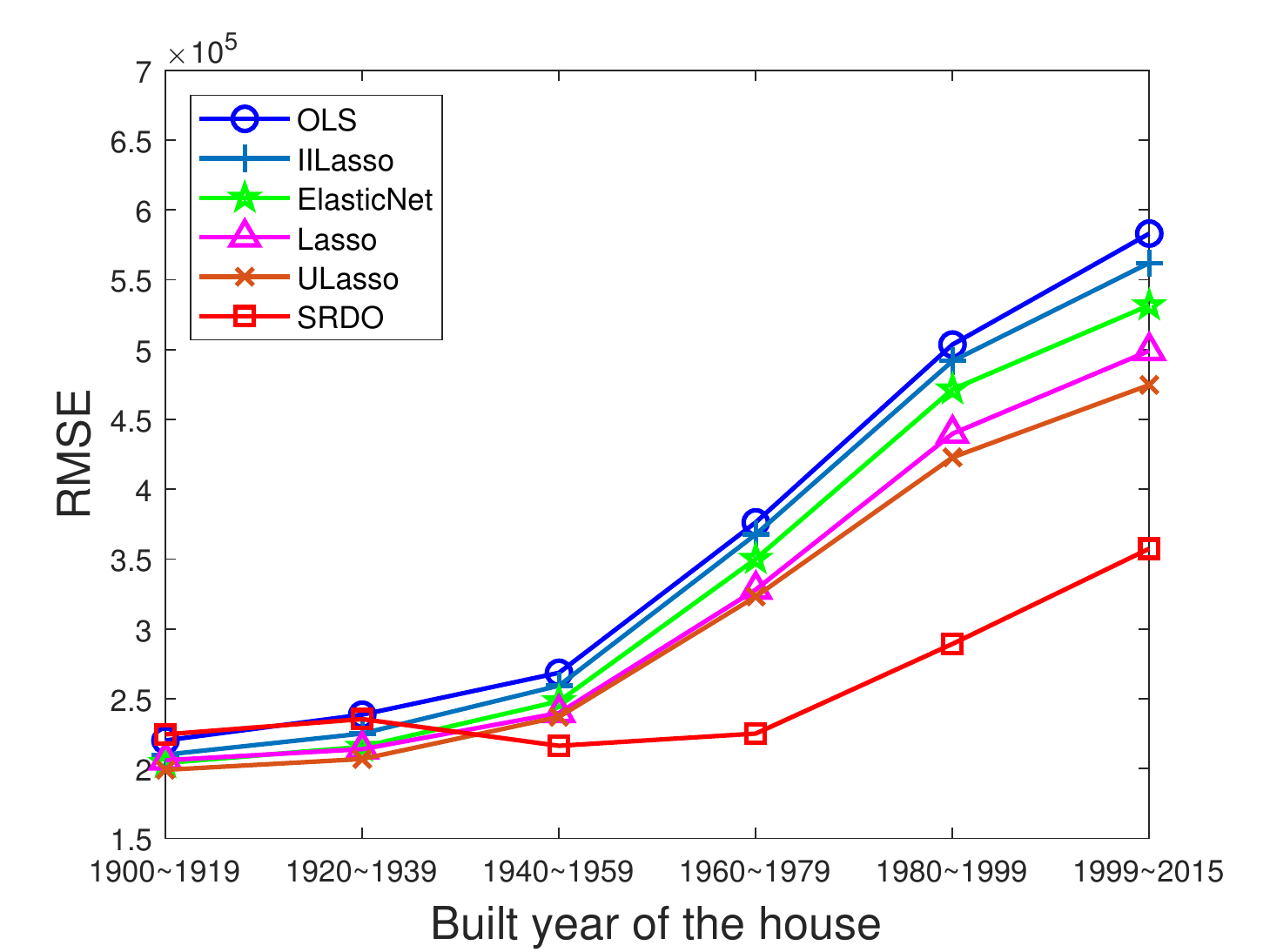}
	\caption{Prediction performances over various built periods of house. All the models are trained on the first period $built\_year \in [1900, 1919]$ and tested on all the six periods.}
	\label{fig:House_RMSE}
\end{figure}

\subsection{Real World Regression Experiments}

\subsubsection{Dataset and Experimental Setting}
In this experiment, we use a real world regression dataset (Kaggle) of house sales prices from King County, USA, which includes the houses sold between May 2014 and May 2015 .
The outcome variable is the transaction price of the house and each sample contains 16 predictive variables such as the built year of the house, number of bedrooms, number of bathrooms, and square footage of home etc.
We normalize all the predictive variables to get rid of the influence by their original scales.

To test the stability of different algorithms, we simulate different "environments" according to the built year of the house.
It is fairly reasonable to assume the distribution of predictors as well as their collinearity may vary along the time, due to the changing popular style of architectures.
Specifically, the houses in this dataset were built between 1900$\sim$2015 and we split the dataset into 6 periods, where each period approximately covers a time span of two decades.
We train all the methods on the first period where $built\_year \in [1900, 1919]$ with cross validation, and test them on all the six periods respectively.

\begin{figure}[h]
	\centering
	\includegraphics[width=0.95\columnwidth]{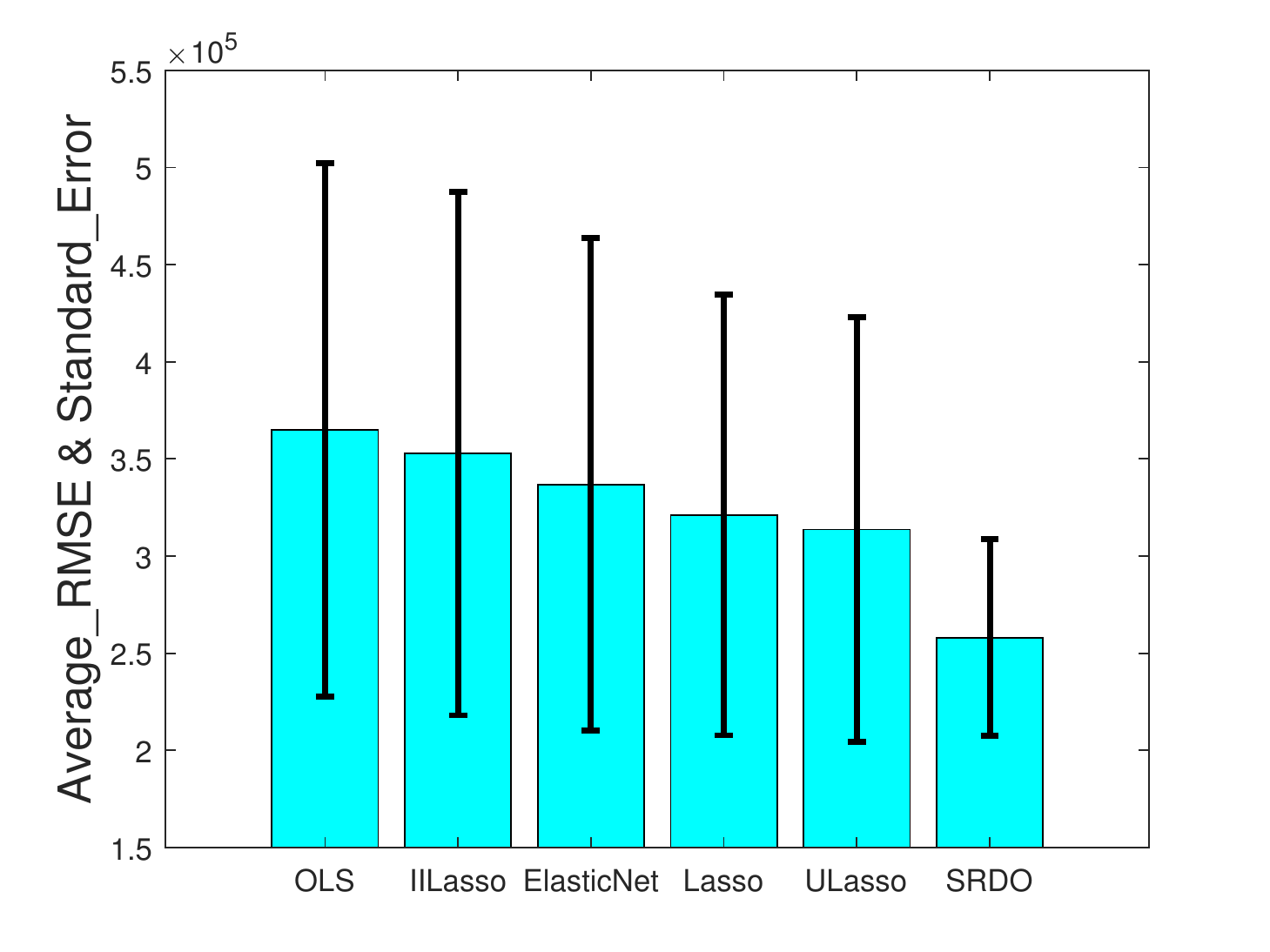}
	\caption{Average performance and stability over various built periods of house.}
	\label{fig:House_Stability}
\end{figure}

%

\begin{figure*}[ht]
	\centering
	\subfloat[AUC over different test environments.]{
		\includegraphics[width=0.9\columnwidth]{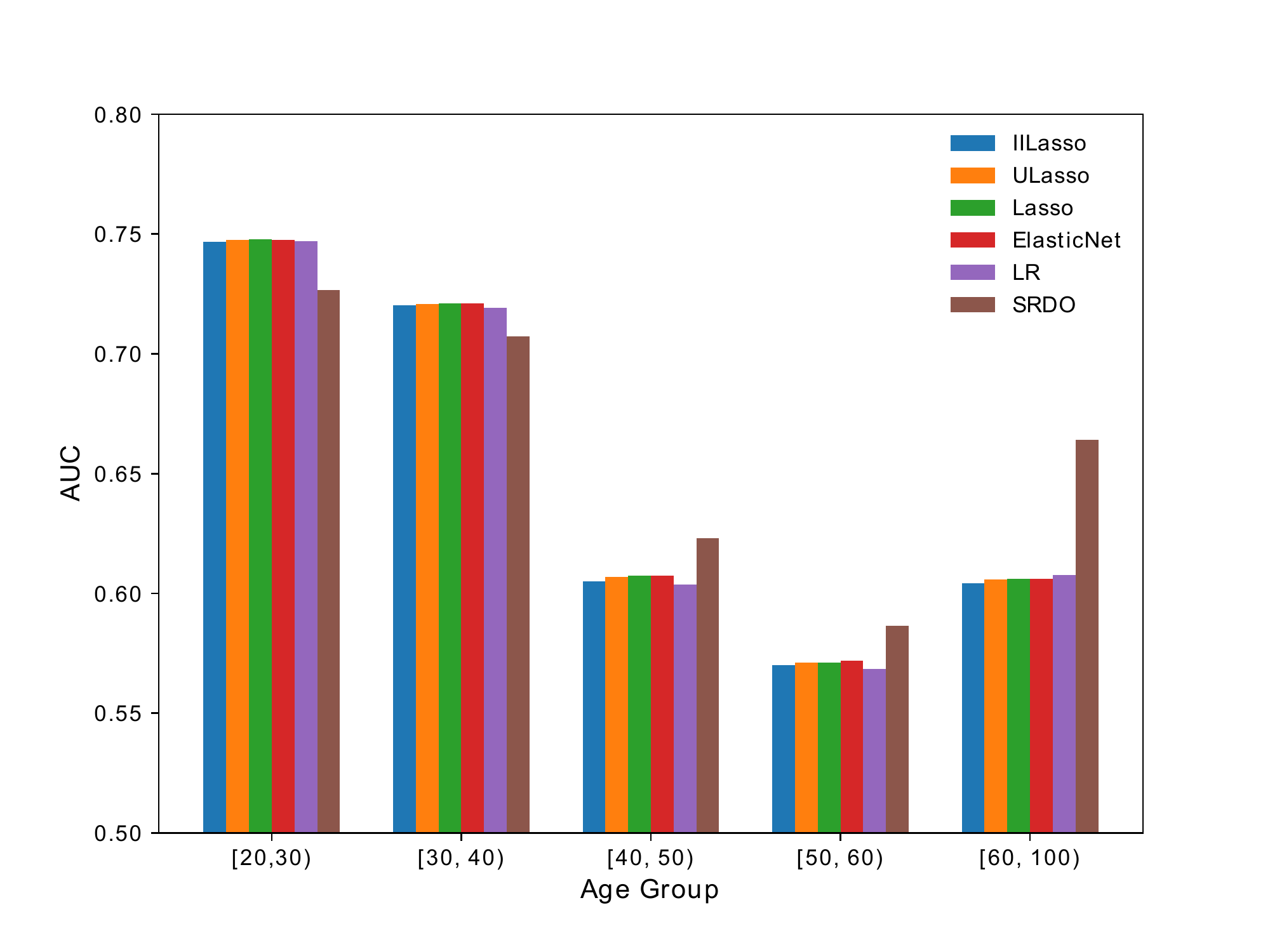}
	}
	\subfloat[Average AUC of all the environments and stability.]{
		\includegraphics[width=0.87\columnwidth]{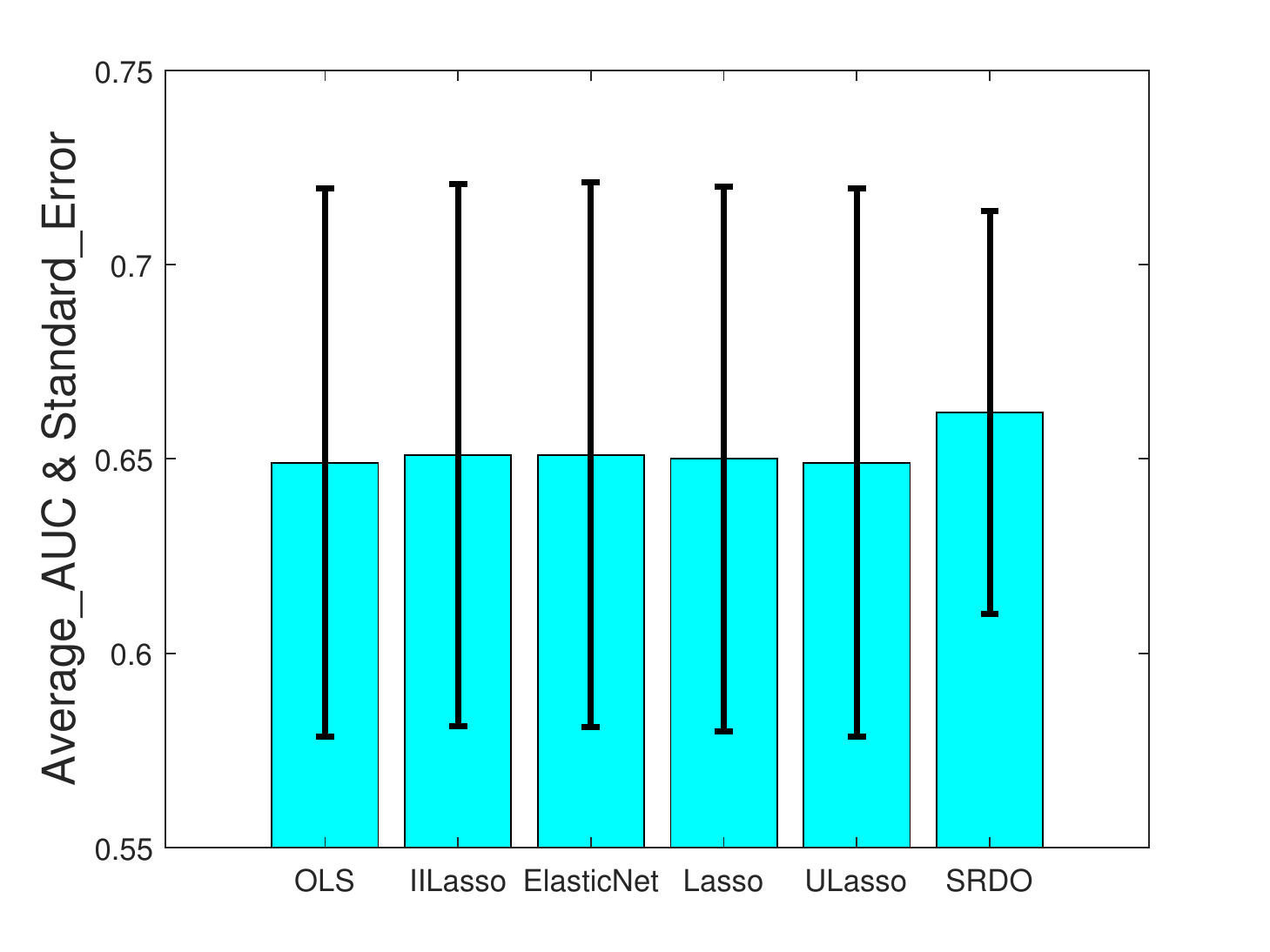}
	}
	\caption{Classification performance across various users' age groups. 
		All the models are trained on $Age \in [20, 30)$ and tested on all the five groups.}
	\label{fig:Ads_Result}
\end{figure*}

\subsubsection{Results}
From Figure \ref{fig:House_RMSE} we can find that our method achieves not only the smallest average error but also a better stability over different test environments compared with other baselines.
The results coincide with our assumption that the data gathered through a long time span would undergo changes in collinearity patterns.
So the removal of collinearity yields a more stable model of changing distributions.
OLS performs the worst due to its nature of sensitivity to collinearity which leads to large estimation variance.
Meanwhile, ElasticNet and Lasso gain a notable margin against OLS as they involve the $l_1$ and $l_2$ regularization for sparseness and variance reduction, which is usually favorable in real applications.
Note that ULasso and IILasso, report different performances compared with Lasso.
A plausible reason is that IILasso impose stronger penalty on the correlation between variables than ULasso by using $R_{j k}=\left|r_{j k}\right| /\left(1-\left|r_{j k}\right|\right)$ instead of $R_{j k}=r_{j k}^{2}$ in ULasso, where $r_{j k}=\frac{1}{n}\left|X_{j}^{\top} X_{k}\right|$ is the absolute sample correlation. 
And the over penalty results in the over-sparsity of selected models.

From Figure \ref{fig:House_Stability}, we can find a clear error inflation along the time axis for all the methods.
Note that the models are trained in period 1. 
The longer time interval from period 1, the larger distribution shifting may incur, meaning more challenging prediction tasks. 
The results show that our method performs much better than baselines in period 3-6, and also produce comparable performances with baselines in the first two periods without obvious distribution change. 
Therefore, in practical use, our algorithm is more reliable, especially when one expects to encounter obvious environment changes in test scenarios. 

\subsection{Real World Classification Experiments}

\subsubsection{Dataset and Experimental Setting}
\textbf{WeChat Ads} is an online advertising dataset collected from Tencecnt WeChat App during September 2015 which contains the user feedback over advertisement flow.
For each advertisement, there are two types of feedbacks: "Like" and "Dislike".  
For each user, there are 56 features characterizing his/her profile including (1) demographic attributes, such as age and gender, (2) number of friends, (3) device (iOS or Android), and (4) the user's various custom settings on WeChat App.

To test the stability of different algorithms, we simulate different environments via stratification over user's age since we consider age as a vital factor which may affect one's personal interest, online behavior etc.
Specifically, we split the dataset into 5 subsets by user's age, including $Age \in [20, 30)$, $Age \in [30, 40)$, $Age \in [40, 50)$, $Age \in [50, 60)$ and $Age \in [60, 100)$.
We train all the methods on users with $Age \in [20, 30)$ via cross validation, and test them on all the five age groups respectively.

\subsubsection{Results}

We plot the classification performance in terms of AUC for each method in Figure \ref{fig:Ads_Result}.
We can find that generally the performance of all the methods would degrade when tested on people from different groups, which is fairly reasonable in that the online behavior patterns are considerably different for people with different age.
Similar to the previous regression experiments, our method generally helps when the distribution shifting is large and more robust to the discrepancy between training and test distribution.
One plausible reason why overall improvement of AUC is moderate compared with regression task is that the collinearity problem of this dataset is not as severe as the hosue sales data, which incurs less inflation of estimation bias for traditional methods.

\section{Conclusion and Discussion}
In this paper, we investigated the stable learning problem for linear regression with model misspecification bias.
We proposed a method to reduce the effect of collinearity in the training data via sampling reweighting. 
We theoretically showed that there exists an optimal set of sample weights that can make the design matrix nearly orthogonal in idealized situations. 
In more realistic situations, the empirical results show that our method can improve the stability of linear models when the test data differs from the training data. 
Our method is a general data pretreatment method, which can be seamlessly integrated into classical linear models such as ordinary least squares and logistic regression.
It provides a unified approach to alleviate the problem of collinearity for statistical estimation.

\section{Acknowledgements}
This work was supported in part by National Key R\&D Program of China (No. 2018AAA0102004, No. 2018AAA0101900), National Natural Science Foundation of China (No. 61772304, No. 61521002, No. 61531006, No. U1611461), Beijing Academy of Artificial Intelligence (BAAI).
Peng Cui, Tong Zhang and Kun Kuang are co-corresponding authors.
All opinions in this paper are those of the authors and do not necessarily reflect the views of the funding agencies.

\bibliographystyle{aaai}
\bibliography{aaai.bib}
\end{document}